\documentclass[11pt]{article}
\usepackage{amssymb}
\usepackage{amsmath}
\usepackage{amsfonts}
\usepackage{graphicx}
\usepackage{color}
\usepackage[round]{natbib}
\newtheorem{defn}{\noindent \bf{Definition}}
\newtheorem{thm} {\noindent \bf{Theorem}}

\newtheorem{as}{\noindent \bf{Assumption}}
\newtheorem{lem}{\noindent \bf{Lemma}}

\newenvironment{proof}[1][Proof]{\noindent \textbf{#1.} }{\ \rule{0.5em}{0.5em}}
\newcounter{remno} \setcounter{remno}{0}
\begin{document} 
\title{Adaptive Policies for Sequential Sampling under Incomplete
  Information and a Cost Constraint}

\author{Apostolos Burnetas and Odysseas Kanavetas\\
Department of Mathematics, University of Athens\\
Panepistemiopolis, Athens 15784, Greece\\
\{aburnetas,okanav\}@math.uoa.gr}

\maketitle

\begin{abstract}
  We consider the problem of sequential sampling from a finite number
  of independent statistical populations to maximize the expected
  infinite horizon average outcome per period, under a constraint that
  the expected average sampling cost does not exceed an upper
  bound. The outcome distributions are not known. We construct a class
  of consistent adaptive policies, under which the average outcome
  converges with probability 1 to the true value under complete information for all
  distributions with finite means. We also compare the rate of
  convergence for various policies in this class using simulation.
\end{abstract}

\section{Introduction}
In this paper we consider the problem of sequential sampling from $k$
independent statistical populations with unknown distributions. The
objective is to maximize the expected outcome per period achieved over
infinite horizon, under a constraint that the expected sampling cost
per period does not exceed an upper bound.  The introduction of a
sampling cost introduces a new dimension in the standard tradeoff
between experimentation and profit maximization faced in problems of
control under incomplete information. The sampling cost may prohibit
using populations with high mean outcomes because their sampling cost
may be too high. Instead, the decision maker must identify the subset
of populations with the best combination of outcome versus cost and
allocate the sampling effort among them in an optimal manner.

From the mathematical point of view, this class of problems
incorporates statistical methodologies into mathematical programming
problems. Indeed, under complete information, the problem of effort
allocation under cost constraints is typically formulated in terms
of linear or nonlinear programming. However when some of the problem
parameters are not known in advance but must be estimated by
experimentation, the decision maker must design adaptive learning and
control policies that ensure learning about the parameters while at
the same time ensuring that the profit sacrificed for the learning
process is as low as possible.

The model in this paper falls in the general area of multi-armed
bandit problems, which was initiated by \cite{rob52}, who proposed a
simple adaptive policy for sequentially sampling from two unknown
populations in order to maximize the expected outcome per unit time
infinite horizon.  \cite{lai-rob85} generalize the results by
constructing asymptotically efficient adaptive policies with optimal
convergence rate of the average outcome to the optimal value under
complete information and show that the finite horizon loss due to
incomplete information increases with logarithmic rate.
\cite{kat-rob95} prove that simpler index-based efficient policies
exist in the case of normal distributions with unknown means, while
\cite{ad-mab} extend the results on efficient policies in the
nonparametric case of discrete distributions with known support.

In a finite horizon \cite{kul00} develop a minimax version of the
\cite{lai-rob85} results for two populations, while \cite{Auer02}
construct policies which also achieve logarithmic regret uniformly
over time, rather than only asymptotically.

\vspace{0in} In all works mentioned above there is no side constraint
in sampling. Problems with adaptive sampling and side constraints are
scarce in the literature. \cite{Wang} considers a multi-armed bandit
model with constraints and adopts a Bayesian formulation and the
Gittins-index approach. The paper proposes several heuristic policies.
\cite{Pez} also consider the problem of estimating the distribution of
a single population with sampling cost under the assumption that the
number of users who will benefit from the depends on the outcome of
the estimation. Finally, \cite{mad04} present computational complexity
analysis for a version of the multi-armed bandit problem with Bernoulli
outcomes and Beta priors, where there is a total budget for
experimentation, which must be allocated to sampling from the
different populations.

Another approach, which is closer to the one we adopt here is to
consider the family of stochastic approximations and reinforcement
learning algorithms. The general idea is to select the sampled
population following a randomized policy with randomization
probabilities that are adaptively modified after observing the outcome
in each period. The adaptive scheme is based on the stochastic
approximation algorithm. Algorithm of this type are analyzed in
\cite{Poz} for the more general case where the population outcomes
have Markovian dynamics instead of being i.i.d..

The contribution of this paper is the construction of a family of
policies for which the average outcome per period converges to the
optimal value under complete information for all distributions of
individual populations with finite means. In this sense, it
generalizes the results of \cite{rob52} by including a sampling cost
constraint.  The paper organized as follows. In Section $2$, we
describe the model in the complete and incomplete information
framework. In Section $3$, we construct a class of adaptive sampling
policies and prove that it is consistent. In Section $4$, we explore
the rate of convergence of the proposed policies using
simulation. Section 5 concludes.

\bigskip

\section{Model description}

Consider the following problem in adaptive sampling. There are $k$
independent statistical populations, $i=1,\ldots,k$. Successive
samples from population $i$ constitute a sequence of i.i.d. random
variables $X_{i1},X_{i2},\ldots$ following a univariate distribution
with density $f_{i}(\cdot)$ with respect to a nondegenerate measure
$v$. Then the stochastic model is uniquely determined by the vector
$f=(f_1,\ldots,f_k)$ of individual pdf's. Given f let
$\underline{\mu}(f)$ be the vector of expected values, i.e. $\mu_i
(f)=E^{f_i}(X_i)$. The form of $f$ is not known. In each period the
experimenter must select a population to obtain a single sample
from. Sampling from population $i$ incurs cost $c_i$ per sample and
without loss of generality we assume $c_1 \leq c_2 \leq \ldots \leq
c_k$, but not all equal. The objective is to maximize the expected
average reward per period subject to the constraint that the expected
average sampling cost per period over infinite horizon does not exceed
a given upper bound $C_0$. Without loss of generality we assume
$c_1\leq C_0<c_k$. Indeed if $C_0<c_1$ then the problem is
infeasible. On the other hand if $C_0\geq c_k$ then the cost
constraint is redundant. Let $d=max\{ j:c_j\leq C_0 \}$. Then $1\leq
d< k$ and $c_d\leq C_0 <c_{d+1}$.

\subsection{Complete information framework}

\vspace{0in}We first analyze the complete information problem. If all
$f_{i}(\cdot)$ are known, then the problem can be modeled via linear
programming. Consider a randomized sampling policy which at each
period selects population $j$ with probability $x_j$, for
$j=1,\ldots,k$. To find a policy that maximizes the expected reward,
we can formulate the following linear program in standard form

\begin{eqnarray}
z^{*} & = & \max
\sum_{j=1}^{k}\mu_{j} x_j \nonumber\\
&&\sum_{j=1}^{k}c_j x_j + y = C_0 \label{lp}\\
&&\sum_{j=1}^{k}x_j =1 \nonumber\\
&&x_j \geq 0,\forall j. \nonumber
\end{eqnarray}

\vspace{0in}\noindent Note that $z^*$ depends on $f$ only through the
vector $\underline{\mu}(f)$, i.e. $z^*$ is the same for all collections
of pdf with the same $\mu$. Therefore in the remainder we will denote
$z^*$ as a function of the unknown mean vector $\underline{\mu}$.

In the analysis we will also use the dual linear program (DLP) of
\eqref{lp},
\begin{eqnarray*}
z^{*}_{D} & = & \min \ g+C_0 \lambda \\ &&
g+c_1 \lambda \geq \mu_1 \\ &&\hspace{1cm}
\vdots \label{dlp}\\ 
&& g+c_k \lambda \geq \mu_k \\\ && g\in
\mathbb{R},\lambda \geq 0 ,
\end{eqnarray*}
\noindent with two variables $\lambda$ and $g$ which
correspond to the first and second constraints of \eqref{lp},
respectively.

\noindent
The basic matrix $B$ corresponding to a   Basic Feasible Solution (BFS) of
problem \eqref{lp} may take one of two forms:

\vspace{0in}\noindent In the first case, the basic variables are $x_i,
x_j$, for two populations $i,j$, with $c_i \leq C_0 \leq c_j, c_i <
c_j$, and the basic matrix is
\begin{equation} \mathbf{B} = \left( \begin{array}{cc} c_i & c_j\\ 1 &
1\\
\end{array} \right). \nonumber
\end{equation}

The BFS is then
\begin{equation} x_{i}=\frac{c_j - C_0}{c_i -c_j}, \ x_{j}=\frac{C_0
-c_i}{c_i -c_j}, \ \text{and} \ x_m =0 \ \text{for} \ m\neq
i,j, \ y=0,\nonumber
\end{equation}
\vspace{0in}\noindent with
\begin{equation}
z(\underline{x})=\mu_{i}x_{i}+\mu_{j}x_{j}.
\nonumber
\end{equation}
The solution is nondegenerate when $c_i < C_0 < c_j$ and degenerate
when $C_0 = c_i$ or $C_0 = c_j$. In the latter case, it corresponds to
sampling from a single population $l=i$ or $l=j$, respectively:
\begin{equation} x_l =1, \ \ x_m =0 \ \forall m\neq l, \ \ y=0,\nonumber
\end{equation}
\noindent with
\begin{equation} z(\underline{x})=\mu_{l}.  \nonumber
\end{equation}

The second case of a BFS corresponds to basic variables $x_i, y$ for a
population $i$ with $c_i \leq C_0$. The basic matrix is
\begin{equation} \mathbf{B}=\left( \begin{array}{cc} c_i & 1\\ 1 & 0
\\
\end{array} \right).\nonumber
\end{equation}
In this case the BFS corresponds to sampling from population $i$ only
\begin{equation}
x_i =1, \ \ x_m =0 \ \forall m\neq i, \ \ y=C_0 -c_i,\nonumber
\end{equation}
\noindent with
\begin{equation}
z(\underline{x})=\mu_{i}.  \nonumber
\end{equation}
The solution is nondegenerate if $c_i < C_0$, otherwise it is degenerate.

From the above it follows that a BFS is degenerate if $x_l = 1$ for
some $l$ with $c_l=C_0$. Any basic matrix $B$ that includes $x_l$ as a
basic variable corresponds to this BFS.

For a BFS $x$ let
$$
b = \{ i : x_i >0\}.
$$
Then, either $b=\{i,j\}$ for some $i,j$ with
$i\leq d\leq j$, or $b=\{i\}$ for some $i\leq d$. There is a one to
one correspondence between basic feasible solutions and sets $b$ of
this form.  We use $K$ to denote the set of BFS, or equivalently
\begin{equation}
  K=\{ b \ : \ b=\{i,j\}, \ i\leq d\leq j \text{ or }b=\{i\}, \ i\leq d  \}.\nonumber
\end{equation}
\noindent Since the feasible region of \eqref{lp} is bounded, $K$ is finite.

For a basic matrix $B$, let $v^B= (\lambda^B, g^B)$ denote the dual
vector corresponding to $B$, i.e., $v^B = \mu_B B^{-1}$, where
$\mu_B=(\mu_i, \mu_j)$, or $\mu_B=(\mu_i, 0)$, depending on the form
of $B$.

Regarding optimality, a BFS is optimal if and only if for at least one
corresponding basic matrix $B$ the reduced costs (dual slacks) are all
nonnegative:
\begin{equation}
  \label{eq:phidef}
\phi^B_a \equiv c_{\alpha} \lambda^B + g^B - \mu_{\alpha}
% = v^B
% \left (
%   \begin{array}{c}
%     c_{\alpha} \\ 0
%   \end{array}
%  \right ) - \mu_{\alpha}
  \geq 0, \ \alpha=1,\ldots, k.\nonumber
\end{equation}

A basic matrix $B$ satisfying this condition is optimal. Note that if
an optimal BFS is degenerate, then not all basic matrices
corresponding to it are necessarily optimal.

It is easy to show that the reduced costs can be expressed as a linear
combinations
$\phi_{\alpha}^{B}=\underline{w}_{\alpha}^{B}\underline{\mu}$, where
$\underline{w}_{\alpha}^{B}$ is an appropriately defined vector that
does not depend on $\underline{\mu}$.

We finally define the set with optimal solutions of \eqref{lp} for a
$\underline{\mu}$,
\begin{equation}
s(\underline{\mu})=\{ b \in K : b \text{ corresponds to an optimal BFS} \}.
\nonumber
\end{equation}

An optimal solution of \eqref{lp} specifies randomization
probabilities that guarantee maximization of the average reward
subject to the cost constraint. Note that an alternative way to
implement the optimal solution, without randomization, is to sample
periodically from all populations so that the proportion of samples
from each population $j$ is equal to $x_j$. This characterization of a
policy is valid if randomization probabilities are rational.

\subsection{Incomplete information framework}

In this paper we assume that the population distributions are
unknown. Specifically we make the following assumption.

\begin{as} The outcome distributions are independent, and the expected
values $\mu_{\alpha}=E(X_{\alpha})<\infty$, $\alpha=1,\ldots,k$.
\end{as}

Let $F$ be the set of all $f=(f_1,\ldots,f_k)$ which satisfy
A.1. Class $F$ is the effective parameter set in the incomplete
information framework. Under incomplete information, a policy as that
in Section 2.1, which depends on the actual value of $\underline{\mu}$, is not
admissible. Instead we restrict our attention to the class of adaptive
policies, which depend only on the past observations of selections and
outcomes.

Specifically, let $A_t,X_t$ , $t=1,2,...$ denote the population
selected and the observed outcome at period $t$. Let $h_t =(\alpha_1
,x_1,....,\alpha_{t-1},x_{t-1})$ be the history of actions and
observations available at period t.

\vspace{0in}An adaptive policy is defined as a sequence $\pi
=(\pi_1,\pi_2,...)$ of history dependent probability distributions on
$\{1,...,k\}$, such that

\begin{equation} \pi_t (j,h_t)=P(A_t =j|h_t).\nonumber
\end{equation}

\vspace{0in}Given the history $h_n$, let $T_n (\alpha)$ denote the
number of times population $\alpha$ has been sampled during the first
n periods

\begin{equation} T_n (\alpha)= \sum_{t=1}^{n}1\{A_t
=\alpha\}.\nonumber
\end{equation}

Let $S_{n}^{\pi}$ be the reward up to period $n$:

\begin{equation} S_{n}^{\pi}=\sum_{t=1}^{n}X_t,\nonumber
\end{equation} and $C_{n}^{\pi}$ be the total cost up to period $n$:

\begin{equation} C_{n}^{\pi}=\sum_{t=1}^{n}c_{A_t}.\nonumber
\end{equation}

These quantities can be used to define the desirable properties of an
adaptive policy, namely feasibility and consistency.

\begin{defn} A policy $\pi$ is called feasible if

\begin{equation}
\label{eq:feasible}
 \limsup_{n \rightarrow \infty} \frac{E^{\pi}(C_{n}^{\pi})}{n} \leq C_0, \ \forall f \in F.
\end{equation}

\end{defn}

\begin{defn} A policy $\pi$ is called consistent if it is feasible and

\begin{equation} \lim_{n \rightarrow \infty} \frac{S_{n}^{\pi}}{n}
=z^{*}(\underline{\mu}),\text{ a.s. } \ \forall f \in F. \nonumber
\end{equation}

\end{defn}

Let $\Pi^F$ and $\Pi^C$ denote the class of feasible and consistent
policies, respectively. The above properties are reasonable
requirements for an adaptive policy. The first ensures that the
long-run average sampling cost does not exceed the budget. The second
definition means that the long-run average outcome per period achieved
by $\pi$ converges with probability one to the optimal expected value
that could be achieved under full information, for all possible
population distributions satisfying A.1.

Note that consistency as defined in Definition $2$ is equivalent to
the notion of strong consistency of an estimator function.

\section{Construction of a consistent policy}

A key question in the incomplete information framework is whether
feasible and, more importantly, consistent policies exist and how they
can be constructed.

It is very easy to show that feasible policies exist, since the
sampling costs are known. Indeed any randomized policy, such as those
defined in Section 2.1, with randomization probabilities satisfying
the constraints of LP \eqref{lp} is feasible for any distribution
$f$. Thus, $\Pi^F\neq\emptyset$.

On the other hand, the construction of consistent policies is not
trivial. A consistent policy must accomplish three goals: First to be
feasible, second to be able to estimate the mean outcomes from all
populations, and third, in the long-run, to sample from the nonoptimal
populations rarely enough so as not to affect the average profit.

In this section we establish the existence of a class of
consistent policies. The construction follows the main idea of
\cite{rob52}, based on sparse sequences, which is adapted to ensure
feasibility.

We start with some definitions. For any population $j$, let
$\hat{\mu}_{j,t}$, $t=1,2,\ldots$ be a strongly consistent estimator
of $\mu_j$, i.e. $\lim_{t\rightarrow\infty}\hat{\mu}_{j,t}=\mu_j$
a.s.-$f_j$. Such estimators exist; for example from Assumption 1, the
sample mean $\overline{X}_{j,t}=\frac{1}{t}\sum_{k=1}^{t}X_{j,k}$ is
strongly consistent.

For any $n$, let $\hat{\underline{\mu}}_n=(\hat{\mu}_{j,T_{j}(n)}, \
j=1,\ldots,k)$ be the vector estimates of $\underline{\mu}$ based on
the history up to period $n$. Also let $\hat{z}_n=z(\hat{\underline{\mu}}_n)$ denote
the optimal value of the linear program in \eqref{lp} where the estimates
are used in place of the unknown mean vector in the
objective. $\hat{z}_n$ will be referred to as the
Certainty-Equivalence LP. Note that $s(\hat{\underline{\mu}}_n)$ is the set of optimal BFS of $\hat{z}_n$.

The solution of $\hat{z}_n$ corresponds to a sampling policy
determined by an optimal vector $\hat{x}_n$, so that
$\hat{z}_n=\hat{\underline{\mu}}_{n}^{'}\hat{x}_n$.

We next define a class of sampling policies, which we will show to be
consistent. Consider $k$ nonoverlapping sparse sequences of positive
integers,

\begin{equation} \tau_j=\{ \tau_{j,m}, \ m=1,2,\ldots \}, \
j=1,\ldots,k,
\nonumber
\end{equation}
such that
\begin{equation}
  \label{eq:sparse}
\lim_{m\rightarrow\infty}\frac{\tau_{j,m}}{m}=\infty, \ j=1,\ldots,k.
\end{equation}

Now define policy $\pi^0$ which in period $n$ selects any population $j$
with probability equal to

\begin{equation} \pi^0(j|h_n)=\left\{\begin{array}{ll}1 &, \text{ if }
\tau_{j,m}=n \text{ for some }m\geq1 \\ \hat{x}_{n,j} &, \text{
otherwise}\end{array}\right.\nonumber
\end{equation}

\noindent where $\hat{x}_{n}$ is any optimal BFS of the
certainty-equivalence LP $\hat{z}_n$.

The main idea in $\pi^0$ is that at periods which coincide with the
terms of sequence $\tau_j$, population $j$ is selected regardless of
the history. These instances are referred to as forced selections of
population $j$. The purpose of forced selections is to ensure that all
populations are sampled infinitely often, so that the estimate vector
$\hat{\underline{\mu}}_n$ converges to the true mean $\underline{\mu}$
as $n\rightarrow\infty$.

On the other hand, because sequences $\tau_j$ are sparse, the fraction
of forced selections periods converges to zero for all $j$, so that
sampling from the nonoptimal populations does not affect the average
outcome in the long-run.

In the remaining time periods, which do not coincide with a sparse
sequence term, the sampling policy is that suggested by the certainty
equivalence LP, i.e., the experimenter in general randomizes between
those populations, which, based on the observed history, appear to be
optimal.

In the next theorem we prove the main result of the paper, namely that
$\pi^0\in\Pi^C$. The proof adapts the main idea of 
\cite{rob52} to the problem with the cost constraint.

\begin{thm}
\label{consthm}
Policy $\pi^0$ is consistent.
\end{thm}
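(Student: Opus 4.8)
The plan is to verify the two defining requirements of a consistent policy separately: feasibility in the sense of \eqref{eq:feasible}, and the almost-sure convergence $S_n^\pi/n \to z^*(\underline{\mu})$. The backbone of both is the observation that the forced-selection periods form a \emph{deterministic} set, namely the terms of the fixed sparse sequences $\tau_j$, whose cardinality up to time $n$, call it $N_n$, satisfies $N_n/n \to 0$ by \eqref{eq:sparse}. Feasibility is then immediate: in a forced period the conditional expected cost is at most $c_k$, whereas in every non-forced period the selection vector $\hat{x}_n$ is feasible for \eqref{lp}, so $\sum_j c_j \hat{x}_{n,j} \le C_0$; hence $E^{\pi}(C_n^\pi) \le N_n c_k + (n-N_n)C_0$, and dividing by $n$ and letting $n\to\infty$ gives $\limsup_n E^{\pi}(C_n^\pi)/n \le C_0$.

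For the reward I would first establish two convergence facts. Since each $\tau_j$ is infinite, population $\alpha$ is sampled infinitely often, so $T_n(\alpha)\to\infty$ a.s. for every $\alpha$; strong consistency of the estimators then yields $\hat{\underline{\mu}}_n \to \underline{\mu}$ a.s. Next I would use that the reduced costs $\phi_\alpha^B = \underline{w}_\alpha^B \underline{\mu}$ are linear, hence continuous, in $\underline{\mu}$: for any $b \notin s(\underline{\mu})$, each of its basic matrices has some strictly negative reduced cost at $\underline{\mu}$, a condition that persists on a neighborhood; since $K$ is finite, there is a random but a.s. finite time after which $s(\hat{\underline{\mu}}_n) \subseteq s(\underline{\mu})$. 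Because the BFS attached to a given $b$ depends only on the costs and $C_0$, not on the means, this means that for all large $n$ the vector $\hat{x}_n$ is an optimal BFS of the \emph{true} LP, so $\underline{\mu}'\hat{x}_n = z^*(\underline{\mu})$.

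The heart of the argument is the passage from these facts to $S_n^\pi/n \to z^*(\underline{\mu})$, and I would route the reward through the individual populations rather than through the raw sequence $X_t$. Writing $S_n^\pi = \sum_\alpha \sum_{l=1}^{T_n(\alpha)} X_{\alpha,l}$ and applying the ordinary strong law to each i.i.d. stream gives $\frac1n\sum_{l=1}^{T_n(\alpha)} X_{\alpha,l} = (T_n(\alpha)/n)(\mu_\alpha + o(1))$, and since $\sum_\alpha T_n(\alpha)/n = 1$ this yields $S_n^\pi/n - \underline{\mu}'(T_n/n) \to 0$ a.s., where $T_n/n$ denotes the empirical frequency vector. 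It remains to show $\underline{\mu}'(T_n/n) \to z^*(\underline{\mu})$. Here the selection indicators are bounded, so the martingale differences $1\{A_t=\alpha\} - E(1\{A_t=\alpha\}\mid h_t)$ have bounded increments and the $L^2$/Azuma strong law gives $T_n(\alpha)/n - \frac1n\sum_{t\le n}\hat{x}_{t,\alpha} \to 0$ a.s., the forced periods contributing only $N_n/n \to 0$. Finally, using $\underline{\mu}'\hat{x}_t = z^*(\underline{\mu})$ for all large $t$ and discarding the finitely many early terms, $\frac1n\sum_{t\le n}\underline{\mu}'\hat{x}_t \to z^*(\underline{\mu})$, which closes the chain.

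The main obstacle is that Assumption 1 grants only finite means, not finite variances, so one cannot apply an $L^2$ martingale strong law directly to the rewards $X_t$, whose conditional variances may be infinite. The device above is precisely what circumvents this: the unbounded randomness is handled population by population through the strong law, which needs only finite means, while the genuinely adaptive, history-dependent part of the dynamics lives entirely in the \emph{bounded} selection indicators, to which martingale concentration does apply. Care is also needed at the points of degeneracy of \eqref{lp}, where several basic matrices correspond to one BFS; this is handled by applying the continuity argument to each basic matrix separately, as indicated above.
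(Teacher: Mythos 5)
Your proof is correct, and it reaches the conclusion by a genuinely different route than the paper. The paper partitions the time periods according to \emph{which BFS is used}: it defines the counts $SS_i(n)$ of forced selections, $Y^b(n)$ of periods where BFS $b$ governs the randomization, and shows (i) $SS_i(n)/n\to 0$ by sparsity, (ii) $Y^b(n)/n\to 0$ for every $b\notin s(\underline{\mu})$ via a quantitative lemma asserting that $b\in s(\hat{\underline{\mu}}_n)\setminus s(\underline{\mu})$ forces $\|\hat{\underline{\mu}}_n-\underline{\mu}\|\ge\epsilon$ (your qualitative statement that eventually $s(\hat{\underline{\mu}}_n)\subseteq s(\underline{\mu})$, obtained by continuity of the reduced costs over the finitely many basic matrices, is exactly the contrapositive of this lemma), and (iii) the reward $W_n$ accumulated in true-optimal-BFS periods satisfies $W_n/n\to z^*$ by applying the ordinary SLLN along the subsequence of periods in which each optimal $b$ is used and each $j\in b$ is drawn. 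You instead route everything through the empirical frequency vector $T_n/n$: the per-population SLLN reduces $S_n^{\pi}/n$ to $\underline{\mu}'(T_n/n)$, and a bounded-difference martingale SLLN on the selection indicators identifies $T_n/n$ with the Ces\`aro average of the $\hat{x}_t$, which eventually coincide with true-optimal BFS. Your version buys two things: it controls the full reward $S_n^{\pi}$ directly (the paper's $W_n$ omits the forced and non-optimal periods, whose reward contribution still has to be argued to vanish since outcomes are unbounded), and the martingale step automatically delivers the within-BFS randomization frequencies $Y_j^b(n)/Y^b(n)\to x_j^b$ that the paper invokes somewhat implicitly when it claims $z_n^b\to z^*$. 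What the paper's route buys is the avoidance of any martingale machinery --- only the ordinary strong law for i.i.d.\ sequences with finite mean is used --- which keeps the argument closer to the original idea of Robbins. Your explicit feasibility check (conditional expected cost at most $C_0$ in non-forced periods, at most $c_k$ in the vanishing fraction of forced periods) is also welcome; the paper treats feasibility as immediate and does not include it in the proof of the theorem. One small point to make fully rigorous in your write-up: the identity $E(1\{A_t=\alpha\}\mid h_t)=\hat{x}_{t,\alpha}$ requires the tie-breaking rule among multiple optimal BFS of the certainty-equivalence LP to be $h_t$-measurable, which should be stated as part of the definition of the policy.
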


Before we show Theorem $1$, we prove an intermediate result which
shows that if in some period the certainty equivalence LP yields an
optimal solution that is non-optimal under the true distribution $f$,
then the estimate of at least one population mean must be sufficiently
different from the true value. We use the supremum norm $\| x \|=
\max_j |x|$.

\begin{lem}
  For any $\underline{\mu}$ there exists $\epsilon>0$ such that for
  any $n=1,2,\ldots$ if $b\in s(\hat{\underline{\mu}}_n)$ and $b\notin
  s(\underline{\mu})$ for some $b\in K$, then $\|
  \underline{\mu}-\hat{\underline{\mu}}_n \|\geq\epsilon$.
\end{lem}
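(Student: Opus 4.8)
The plan is to turn the statement into a uniform margin bound by exploiting two facts already available: the set $K$ is finite, and each reduced cost $\phi_\alpha^B = \underline{w}_\alpha^B \underline{\mu}$ is a fixed linear function of the mean vector. Reading the claim through its contrapositive, it asserts that once $\hat{\underline{\mu}}_n$ is within $\epsilon$ of $\underline{\mu}$ in supremum norm, no basis can be optimal for the estimate without also being optimal for the truth, i.e. $s(\hat{\underline{\mu}}_n) \subseteq s(\underline{\mu})$. The constant $\epsilon$ will appear as a finite minimum of strictly positive numbers depending only on $\underline{\mu}$ and the fixed cost data, hence uniformly in $n$.

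First I would unwind the optimality criterion. By definition $b \in s(\nu)$ iff \emph{some} basic matrix $B$ corresponding to $b$ has $\phi_\alpha^B(\nu) \geq 0$ for all $\alpha$; therefore $b \notin s(\underline{\mu})$ means \emph{every} such $B$ has a strictly negative reduced cost at $\underline{\mu}$. For each non-optimal $(b,B)$ pair I set $\delta_B := -\min_\alpha \phi_\alpha^B(\underline{\mu}) > 0$ and let $\alpha_B$ be a minimizing index, so $\phi_{\alpha_B}^B(\underline{\mu}) = -\delta_B$. Note $\underline{w}_{\alpha_B}^B \neq \underline{0}$, since otherwise $\phi_{\alpha_B}^B(\underline{\mu}) = \underline{w}_{\alpha_B}^B \underline{\mu} = 0$, contradicting $\delta_B > 0$.

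Next I would combine the two hypotheses for a fixed $n$. Assume $b \in s(\hat{\underline{\mu}}_n)$ and $b \notin s(\underline{\mu})$. The first hypothesis furnishes a basic matrix $B^\ast$ for $b$ with $\phi_\alpha^{B^\ast}(\hat{\underline{\mu}}_n) \geq 0$ for every $\alpha$; the second forces this same $B^\ast$ to violate optimality at $\underline{\mu}$, so $\phi_{\alpha_{B^\ast}}^{B^\ast}(\underline{\mu}) = -\delta_{B^\ast}$. Linearity of the reduced cost then gives
\begin{equation*}
0 \leq \phi_{\alpha_{B^\ast}}^{B^\ast}(\hat{\underline{\mu}}_n) = \phi_{\alpha_{B^\ast}}^{B^\ast}(\underline{\mu}) + \underline{w}_{\alpha_{B^\ast}}^{B^\ast}(\hat{\underline{\mu}}_n - \underline{\mu}) = -\delta_{B^\ast} + \underline{w}_{\alpha_{B^\ast}}^{B^\ast}(\hat{\underline{\mu}}_n - \underline{\mu}),
\end{equation*}
so that $\underline{w}_{\alpha_{B^\ast}}^{B^\ast}(\hat{\underline{\mu}}_n - \underline{\mu}) \geq \delta_{B^\ast}$. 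Bounding the inner product by H\"older's inequality, $|\underline{w}\,\underline{v}| \leq \|\underline{w}\|_1 \|\underline{v}\|$ with $\|\cdot\|_1$ the $\ell_1$ norm dual to the supremum norm $\|\cdot\|$, yields $\|\underline{\mu} - \hat{\underline{\mu}}_n\| \geq \delta_{B^\ast} / \|\underline{w}_{\alpha_{B^\ast}}^{B^\ast}\|_1$.

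Finally I would set $\epsilon := \min \{ \delta_B / \|\underline{w}_{\alpha_B}^B\|_1 \}$, the minimum taken over the finite set of pairs $(b,B)$ with $b \in K \setminus s(\underline{\mu})$ and $B$ corresponding to $b$ (if this set is empty the conclusion is vacuous and any $\epsilon > 0$ works). Being a minimum of finitely many strictly positive numbers, $\epsilon > 0$, and it depends only on $\underline{\mu}$; the bound of the previous step then reads $\|\underline{\mu} - \hat{\underline{\mu}}_n\| \geq \epsilon$ uniformly in $n$, as required. The step demanding the most care, and the main obstacle, is the quantifier bookkeeping created by degeneracy: optimality of a vertex is an \emph{existential} condition over its several basic matrices, so I must ensure that the matrix $B^\ast$ certifying optimality at $\hat{\underline{\mu}}_n$ is the very matrix whose margin $\delta_{B^\ast}$ I invoke at $\underline{\mu}$, rather than silently switching between distinct matrix representations of the same degenerate BFS.
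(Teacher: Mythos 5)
Your proposal is correct and follows essentially the same route as the paper's own proof: pick the basic matrix $B^\ast$ certifying optimality at $\hat{\underline{\mu}}_n$, use the fact that this same matrix must have a strictly negative reduced cost at $\underline{\mu}$, exploit linearity of $\phi^B_\alpha$ in $\underline{\mu}$ together with the duality bound between the supremum and $\ell_1$ norms, and take a minimum over the finitely many $(b,B,\alpha)$ combinations. The only cosmetic differences are that you use the sharper $\|\underline{w}\|_1$ constant where the paper uses $k\|\underline{w}\|_\infty$, and you fix the minimizing index $\alpha_B$ rather than minimizing over all indices with negative reduced cost; both yield a valid positive $\epsilon$.
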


\begin{proof}
  Since $b\notin s(\underline{\mu})$, we have that for any basic
  matrix $B'$ corresponding to BFS $b$ there exists at least one
  $m\in\{ 1,\ldots,k \}$ such that $\phi_{m}^{B'}(\underline{\mu})<0$.
  Therefore,
\begin{equation}
-\underline{w}_{m}^{B'}\underline{\mu}=-\phi_{m}^{B'}(\underline{\mu})>0.
\label{lem1}
\end{equation}

In addition, since $b\in s(\hat{\underline{\mu}}_n)$, there exists a
basic matrix $B$ corresponding to $b$, such for any $m\in\{ 1,\ldots,k
\}$ it is true that $\phi_{m}^{B}(\hat{\underline{\mu}}_n)\geq0$, thus,
\begin{equation}
  \underline{w}_{m}^{B}\hat{\underline{\mu}}_n=\phi_{m}^{B}(\hat{\underline{\mu}}_n)\geq 0.\label{lem2}
\end{equation}

For this basic matrix $B$, it follows from \eqref{lem1} and \eqref{lem2} that
\begin{equation}
\underline{w}_{m}^{B}\hat{\underline{\mu}}_n-\underline{w}_{m}^{B}\underline{\mu}\geq-\phi_{m}^{B}(\underline{\mu}) =|\phi_{m}^{B}(\underline{\mu})| >0\nonumber
\end{equation}

\begin{equation}
\Rightarrow \underline{w}_{m}^{B}(\hat{\underline{\mu}}_n-\underline{\mu})\geq|\phi_{m}^{B}(\underline{\mu})|\nonumber
\end{equation}

\begin{equation}
\Rightarrow k\|\underline{w}_{m}^{B}\|\|\hat{\underline{\mu}}_n-\underline{\mu}\|\geq|\phi_{m}^{B}(\underline{\mu})|\nonumber
\end{equation}

\begin{equation}
\Rightarrow \|\hat{\underline{\mu}}_n-\underline{\mu}\|\geq\frac{|\phi_{m}^{B}(\underline{\mu})|}{k\|\underline{w}_{m}^{B}\|},\nonumber
\end{equation}
\noindent because from the property $\underline{w}_{m}^{B}\underline{\mu}<0$ it follows that $\| \underline{w}_{m}^{B} \|>0$.

Now let
\begin{equation}
\epsilon=\min_{b\in K,b\notin s(\underline{\mu})}  \min_{B \in b} \min_{m\in\{1,\ldots,k\}}\left\{\frac{|\phi_{m}^{B}(\underline{\mu})|}{k\|\underline{w}_{m}^{B}\|}: \ \phi_{m}^{B}(\underline{\mu})<0\right\}>0. \nonumber
\end{equation}
where the minimization over $B \in b$ is taken over all basic matrices corresponding to BFS $b$.

Then $\|\hat{\underline{\mu}}_n-\underline{\mu}\|\geq\epsilon$.

\end{proof}

\noindent {\bf Proof of Theorem 1.}
%For any $\underline{\mu}$ let

%\begin{equation}
%k(\underline{\mu})=\{ j=1,\ldots,k: \ j\in b \text{ for some }b\in s(\underline{\mu})\},\nonumber
%\end{equation}

%\noindent denote the set of populations which are used in at least one optimal solution of \eqref{lp}.

For $i=1,\ldots,k$ let

\begin{equation}
SS_i (n)=\sum_{t=1}^{n}1\{ \tau_{i,m}=t,\text{ for some }m \},\nonumber
\end{equation}

\noindent denote the number of periods in $\{1,\ldots,n\}$ where a forced selection from population $i$ is performed.

Also let,

\begin{eqnarray}
Y^{b}_{j}(n)&=&\sum_{t=1}^{n}1\{b\in s(\hat{\underline{\mu}}_t), b \text{ is used in period $t$, and $j$ is sampled from,}\nonumber\\
&& \text{due to randomization in }b\}.\nonumber\\
Y^b (n)&=&\sum_{j\in b}Y_{j}^{b}(n),\nonumber\\
Y(n)&=&\sum_{b\in s(\underline{\mu})}Y^b (n).\nonumber
\end{eqnarray}

Since these include all possibilities of selection in a period, it is true that

\begin{equation}
n=\sum_{i=1}^{k}SS_i (n)+\sum_{b\notin s(\underline{\mu})}Y^{b}(n)+\sum_{b\in s(\underline{\mu})}Y^{b}(n).\nonumber
\end{equation}

Now let $W_n$ denote the sum of outcomes in periods where true optimal BFS are used:

\begin{equation}
W_n=\sum_{b\in s(\underline{\mu})}\sum_{t=1}^{n} X_t \cdot 1\{ b\text{ is used in period t} \}.\nonumber
\end{equation}

To show the theorem we will prove that

\begin{eqnarray}
&&\lim_{n\rightarrow\infty}\frac{SS_i (n)}{n}= 0, \ i=1,\ldots,k \label{thm1*}\\
&&\lim_{n\rightarrow\infty}\sum_{b\notin s(\underline{\mu})}\frac{Y^{b}(n)}{n}= 0, \text{ a.s.} ,\label{thm2*}\\
&&\lim_{n\rightarrow\infty}\frac{W_n}{n}=z^*(\underline{\mu}), \text{ a.s.}\label{thm3*}.
\end{eqnarray}

First, \eqref{thm1*} holds since $\tau_{i,m}$ are sparse for all $i$. To show \eqref{thm2*}, in no forced selection periods, in order to sample from a BFS $b$ it is necessary but not sufficient that $b\in s(\hat{\underline{\mu}}_n)$, thus

\begin{equation}
Y^{b}(n)\leq \sum_{t=1}^{n}1\{ b \in s(\hat{\underline{\mu}}_t) \}.\nonumber
\end{equation}

For any $b \in s(\hat{\underline{\mu}}_t)$ and $b\notin s(\underline{\mu})$, it follows from Lemma $1$ that

\begin{equation}
\|\hat{\underline{\mu}}_t - \underline{\mu}\|
\geq\epsilon.\nonumber
\end{equation}

Therefore, for $b\notin s(\underline{\mu})$

\begin{eqnarray}
Y^{b}(n)&\leq& \sum_{t=1}^{n}1\{ b \in s(\hat{\underline{\mu}}_t) \}\nonumber\\
&\leq&\sum_{t=1}^{n}1\{ |\hat{\underline{\mu}}_t - \underline{\mu}\|
\geq\epsilon \}\nonumber
\end{eqnarray}

\noindent thus,

\begin{equation}
\frac{Y^b (n)}{n}\leq \frac{1}{n}\sum_{t=1}^{n}1\{\| \hat{\underline{\mu}}_t -
\underline{\mu} \| \geq\epsilon\}\rightarrow 0, \ n\rightarrow\infty,\text{ a.s.},\nonumber
\end{equation}

\noindent because $\hat{\underline{\mu}}_t\rightarrow\underline{\mu}$, a.s., since $\hat{\underline{\mu}}_t$ is strongly consistent estimator, thus \eqref{thm2*} holds.

Now to show \eqref{thm3*} we rewrite $W_n$ as

\begin{eqnarray}
\frac{W_n}{n}&=&\frac{1}{n}\sum_{b\in s(\underline{\mu})}\sum_{t=1}^{n}X_t\cdot1\{ b\text{ is used in period }t \}\nonumber\\
&=&\frac{1}{n}\sum_{b\in s(\underline{\mu})}\sum_{j\in b}\sum_{t=1}^{n}X_t\cdot1\{ b\text{ is used in period }t\text{ and j is sampled from} \}\nonumber\\
&=&\frac{1}{n}\sum_{b\in s(\underline{\mu})}\sum_{j\in b}Y_{j}^{b}(n)\cdot \overline{X}_{j,Y_{j}^{b}(n)}\nonumber\\
&=&\sum_{b\in s(\underline{\mu})}\frac{Y^{b}(n)}{n}\cdot\sum_{j\in b}\frac{Y_{j}^{b}(n)}{Y^b (n)}\cdot\overline{X}_{j,Y_{j}^{b}(n)}.\nonumber
\end{eqnarray}

From this expression it follows that

\begin{eqnarray}
\frac{W_n}{n} -z^*&=&\sum_{b\in s(\underline{\mu})}\frac{Y^{b}(n)}{n}\cdot\sum_{j\in b}\frac{Y_{j}^{b}(n)}{Y^b (n)}\cdot\overline{X}_{j,Y_{j}^{b}(n)}- z^*\nonumber\\
&=&\sum_{b\in s(\underline{\mu})}\frac{Y^{b}(n)}{n}\cdot z_{n}^{b} -z^*,\nonumber
\end{eqnarray}

\noindent where $z_{n}^{b}=\sum_{j\in b}\frac{Y_{j}^{b}(n)}{Y^b (n)}\cdot\overline{X}_{j,Y_{j}^{b}(n)}$.

Since $Y(n)=\sum_{b\in s(\underline{\mu})}Y^b (n)$, we have

\begin{eqnarray}
\frac{W_n}{n} -z^* &=& \sum_{b\in s(\underline{\mu})}\frac{Y^{b}(n)}{n}\cdot z_{n}^{b} -z^* +\frac{Y(n)}{n}z^*-\frac{Y(n)}{n}z^* \nonumber\\
&=&\sum_{b\in s(\underline{\mu})}\frac{Y^{b}(n)}{n}\cdot (z_{n}^{b} -z^*)-(1-\frac{Y(n)}{n})z^*.\nonumber
\end{eqnarray}

To show \eqref{thm3*} we will prove that

\begin{equation}
\frac{Y^{b}(n)}{n}\cdot (z_{n}^{b} -z^*)\rightarrow0\text{ a.s. }\forall b\in s(\underline{\mu}), \text{ and }\frac{Y(n)}{n}\rightarrow1,\text{ a.s.}.\nonumber
\end{equation}

Random variable $Y^b (n)$ is increasing in $n$ and $0\leq Y^b(n)\leq n$, thus either $Y^b (n)\rightarrow\infty$ or $Y^b (n)\rightarrow M$ for some $M<\infty$. We define the following events:

\begin{equation}
D=\{ Y^b (n)\rightarrow\infty \} \text{ and } D^c=\{ Y^b (n)\rightarrow M \}.\nonumber
\end{equation}

Now let $P(D)=p$ and $P(D^c)=1-p$. Also let

\begin{equation}
A=\{ \lim_{n\rightarrow\infty}\frac{Y^{b}(n)}{n}\cdot (z_{n}^{b} -z^*)=0 \}.\nonumber
\end{equation}

Then $P(A)=P(A|D)\cdot p+P(A|D^c)\cdot(1-p)$.

Now,

\begin{eqnarray}
P(A|D)&=&P(\lim_{n\rightarrow\infty}\frac{Y^{b}(n)}{n}\cdot (z_{n}^{b} -z^*)=0|\lim_{n\rightarrow\infty} Y^b (n)=\infty)\nonumber\\
& \geq & P(\lim_{n\rightarrow\infty}z_{n}^{b} -z^*=0|\lim_{n\rightarrow\infty} Y^b (n)=\infty)\nonumber\\
&=&1,\nonumber
\end{eqnarray}

\noindent from the strong law of large numbers, since $\frac{Y^b (n)}{n}\leq1$ $\forall$ $n$, and

\begin{equation}
P(A|D^c)=P(\lim_{n\rightarrow\infty}\frac{Y^{b}(n)}{n}\cdot (z_{n}^{b} -z^*)=0|\lim_{n\rightarrow\infty} Y^b (n)=M< \infty)=1,\nonumber
\end{equation}

\noindent since in this case $z_{n}^{b}-z^*$ is bounded for any finite $n$.

Therefore, $P(A)=1$, thus

\begin{equation}
\frac{Y^{b}(n)}{n}\cdot (z_{n}^{b} -z^*)\rightarrow0, \ n\rightarrow\infty, \text{ a.s. }, \ \forall b\in s(\underline{\mu}).\nonumber
\end{equation}

Finally,

\begin{equation}
\frac{Y(n)}{n}=\sum_{b\in s(\underline{\mu})}\frac{Y^{b}(n)}{n}=1-\sum_{t=1}^{n}\frac{SS_i (n)}{n}- \sum_{b\notin s(\underline{\mu})}\frac{Y^{b}(n)}{n}\rightarrow 1,\text{ a.s.}, \ n\rightarrow \infty \nonumber.
\end{equation}

Thus the proof of the theorem is complete.

$\blacksquare$

\section{Rate of Convergence - Simulations}

From the results of the previous section it follows that there exists
significant flexibility in the construction of a consistent sampling
policy.  Indeed, any collection of sparse sequences of forced
selection periods satisfying \eqref{eq:sparse} guarantees that Theorem
\ref{consthm} holds.

In this section we refine the notion of consistency and examine how
the rate of convergence of the average outcome to the optimal value is
affected by different types of sparse sequences.  Furthermore, since
the sensitivity analysis will be performed using simulation, it is
more appropriate to use the expected value of the deviation as the
convergence criterion. We thus consider the expected difference of the
average outcome under a consistent policy $\pi$ from the optimal
value:
$$
d_n^{\pi}( \underline{\mu})=E^{\pi}\left ( \frac{W_n}{n}\right
)-z^*(\underline{\mu}).
$$

Note that the almost sure convergence of $\frac{W_n}{n}$ to
$z^*(\underline{\mu})$ proved in Theorem \ref{consthm} does not imply
convergence in expectation, unless further technical assumptions on
the unknown distributions are made. For the purpose of our simulation
study, we will further assume that the outcomes of any population are
absolutely bounded with probability one, i.e., $P(|X_j|\leq u) = 1$,
for some $u>0$. Under this assumption it is easy to show that Theorem
1 implies
\begin{equation}
  \label{eq:convexp}
  \lim_{n \to \infty}d_n^{\pi}(\underline{\mu}) = 0, 
\end{equation}
for any consistent policy $\pi$ and any vector  $\underline{\mu}$.

To explore the rate of convergence in \eqref{eq:convexp}, we performed a
simulation study, for a problem with $k=4$ populations. The outcomes
of population $i$ follow binomial distribution with parameters $(N,
p_i)$, where $p_1=0.3, p_2=0.5, p_3=0.9, p_4=0.8$. The vector of
expected values is thus $\underline{\mu}=(1.5, 2.5, 4.5, 4)$. The cost
vector is $c=(3,4,8,10)$ and $C_0=5$. Under this set of values the
optimal policy under incomplete information is $x=(0, 3/4, 1/4, 0)$,
$y=0$ and $z^*(\underline{\mu})=3$, i.e., it is optimal to randomize
between populations 2 and 3, the expected sampling cost per period is
equal to 5 and the expected average reward per period is equal to 3.

For the above problem we simulated the performance of a consistent
policy for sparse sequences of power function form:
\[
\{\tau_{j,m} = \ell_j + m^b, m=1,2,\ldots, \}, j=1,\ldots, k,
\]
where $\ell_j$ are appropriately defined constants which ensure that
the sequences are not overlapping, and the exponent parameter $b$ is
common for all populations. We compared the convergence rate in
\eqref{eq:convexp} for five values of $b$: (1.2, 1.5, 2, 3, 5). For
each value of $b$ the corresponding policy was simulated for 1000
scenarios of length $n=10^4$ periods each, to obtain an estimate of
the expected average outcome per period
$d_n^{\pi}(\underline{\mu})$. The results of the simulations are
presented in Figure \ref{fig:av_var}.
\begin{figure}
  \centering
  \includegraphics[width=1\textwidth]{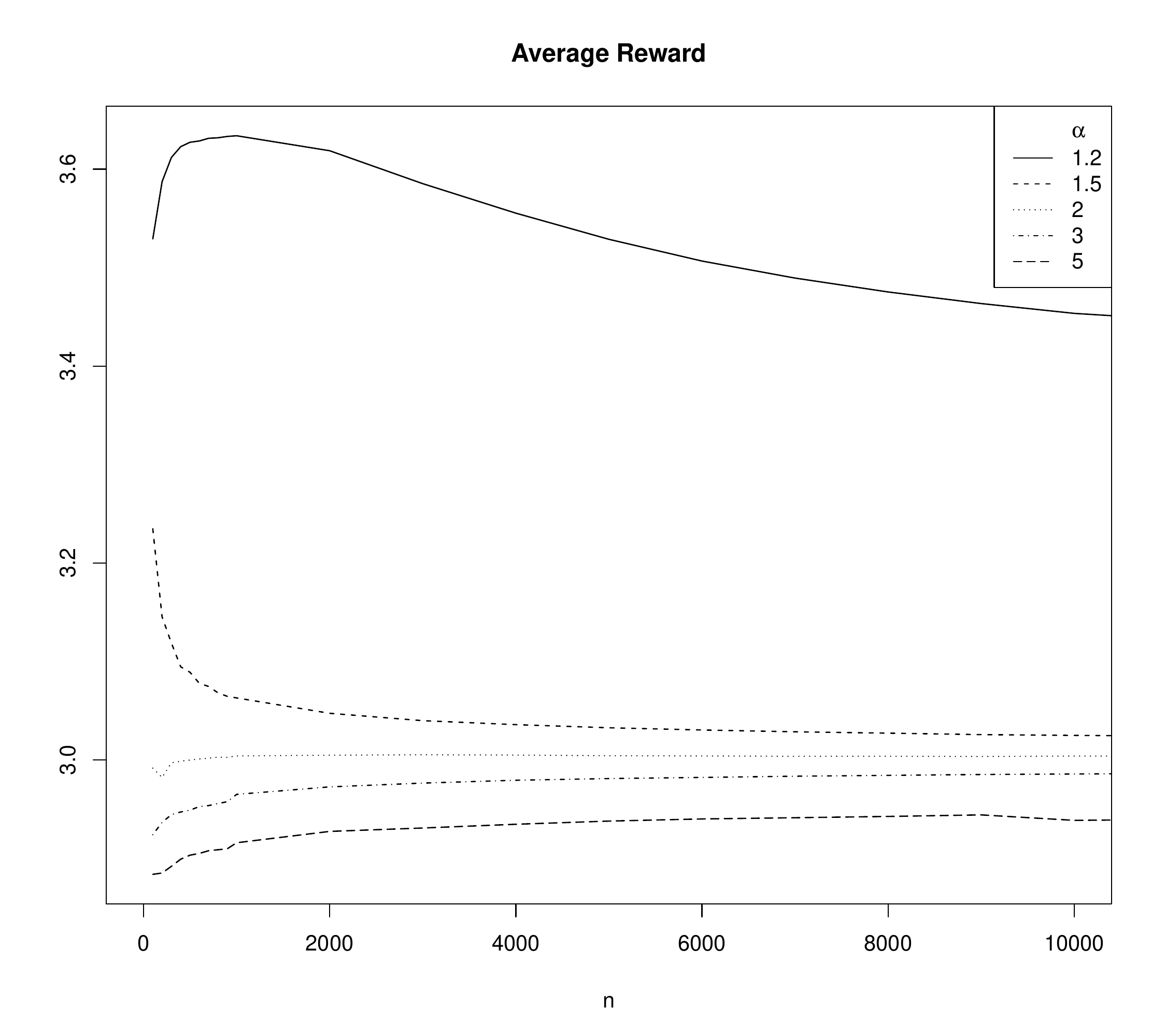}
  \caption{Comparison of Convergence Rates for Power Sparse Sequences}
  \label{fig:av_var}
\end{figure}

We observe in Figure \ref{fig:av_var} that the convergence is slower
both for small and large values of $b$ and faster for intermediate
values. Especially for $b=1.2$ the difference is relatively large even
after 10000 periods. This is explained as follows. For small values of
$b$ the forced selections are more frequent. Although this has the
desirable effect that the mean estimates for all populations become
accurate very soon, it also means that non-optimal populations are
also sampled frequently because of forced selections. As a result the
average outcome may deviate from the true optimal value for a longer
time period. On the other hand, for large values of $b$ the sequences
$\tau_j$ all become very sparse and thus the forced selections are
rare. In this case it takes a longer time for the estimates to
converge, and the linear programming problems may produce non-optimal
solutions for long intervals.
 
It follows from the above discussion that intermediate values of $b$
are generally preferable, since they offer a better balance of the two
effects, fast estimation of all mean values and avoiding non optimal
populations. This is also evident in the graph, where the value $b=2$
seems to be the best in terms of speed of convergence. 

To address the question of accuracy of the comparison of convergence
rates based on simulation, Figure \ref{fig:av_a2} presents a 95\%
confidence region for the average outcome curve corresponding to
$b=2$, based on 1000 simulated scenarios. The confidence region is
generally very narrow (note that the vertical axes have different
scale in the two figures), thus the estimate of the expected average
outcome is quite accurate. This is also the case for the other curves,
therefore the comparison of convergence rates is valid. Furthermore,
the length of the confidence interval becomes smaller for larger time
periods since, as expected, the convergence to the true value is
better for longer scenario durations.

\begin{figure}[h]
  \centering
  \includegraphics[width=1\textwidth]{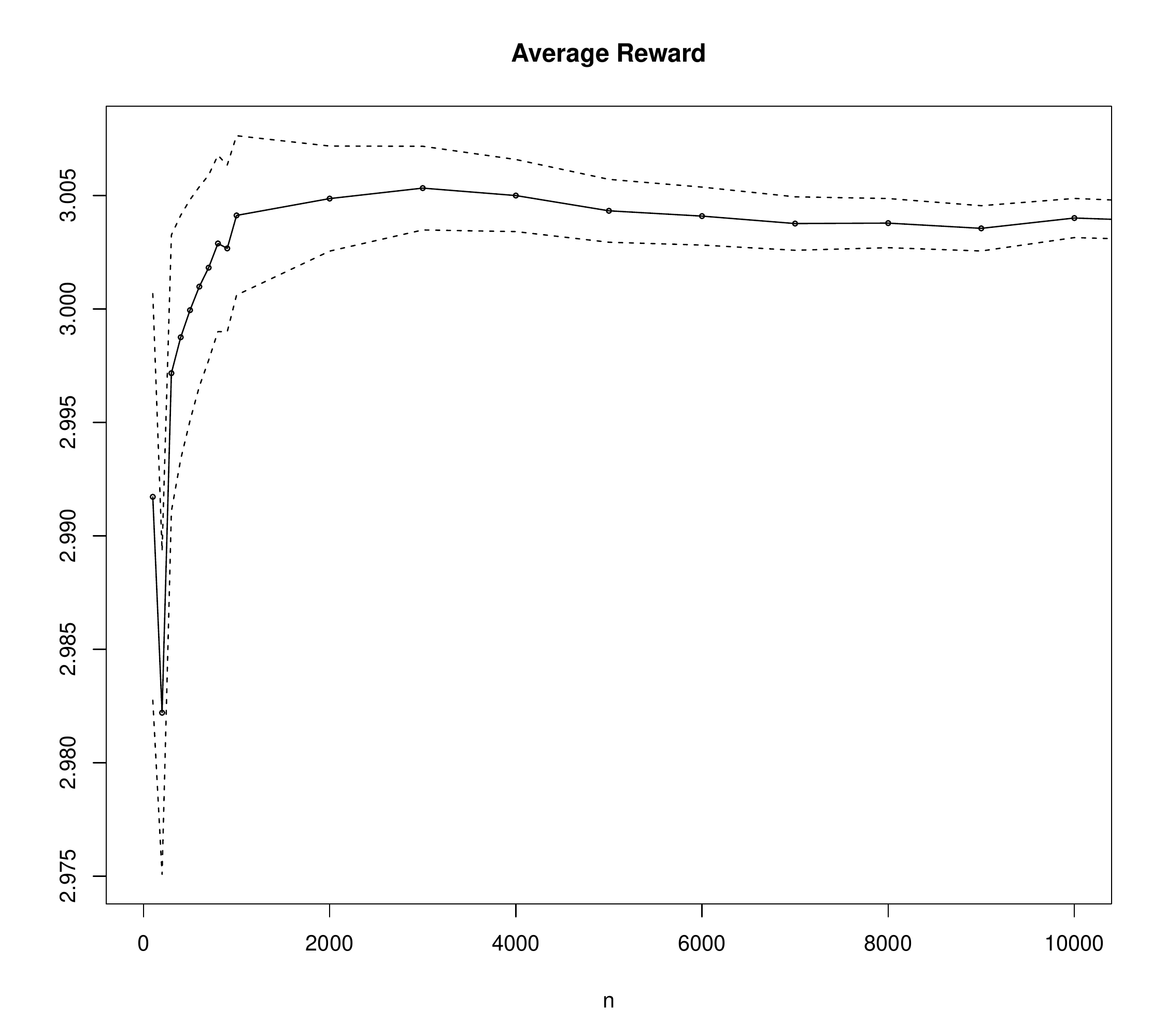}
  \caption{Confidence Region for Average Outcome for $b=2$}.
  \label{fig:av_a2}
\end{figure}

Another issue arising from Figure \ref{fig:av_var} is the
following. For $b=1.2$ the average outcome converges very slowly to
$z^*$, but remains above it for the entire scenario
duration. Thus it could be argued that, although the convergence is
not good, this policy is actually preferable, because it yields higher
average outcomes than the other policies. It also seems to contradict
the fact that $z^*$ is the maximum average outcome
under complete information, since there is a sampling policy that even
under incomplete information performs better. 

The reason for this discrepancy is related to the form of the cost
constraint \eqref{eq:feasible}.  The constraint requires the
infinite-horizon expected cost per period not to exceed $C_o$. This
does not preclude the possibility that one or more populations with
large sampling costs and large expected outcomes could be used for
arbitrarily long intervals before switching to a constrained-optimal
policy for the remaining infinite horizon. Such policies might achieve
average rewards higher than $z^*$ for long intervals, however this is
achieved by ``borrowing'', i.e., violating the cost constraint, also
for long time periods. Since \eqref{eq:feasible} is only required to
hold in the limit, this behavior of a policy is allowed.

Although the consistent policies in Section 3 are not designed
specifically to take advantage of this observation, they are neither
designed to avoid it. Therefore, it is possible, as it happens here for
$b=2$, that a consistent policy may achieve higher than optimal
average outcomes for long time periods before it converges to $z^*$.

The above discussion shows that the constraint as expressed in
\eqref{eq:feasible}, may not be appropriate, if for example the
sampling cost is a tangible amount that must be paid each time an
observation is taken, and there is a budget $C_0$ per period for
sampling. In this situation a policy may suggest exceeding the budget
for long time periods and still be feasible, something that may not be
viable in reality. In such cases it would be more realistic to impose
a stricter average cost constraint, for example to require that
\eqref{eq:feasible} hold for all $n$ and not only in the limit.

\section{Conclusion and Extensions}

In this paper we developed a family of consistent adaptive policies
for sequentially sampling from $k$ independent populations with
unknown distributions under an asymptotic average cost constraint. The
main idea in the development of this class of policies is to employ a
sparse sequence of forced selection periods for each population, to
ensure consistent estimation of all unknown means and in the remaining
time periods employ the solution obtained from a linear programming
problem that uses the estimates instead of the true values. We also
performed a simulation study to compare the convergence rate for
different policies in this class.

This work can be extended in several directions. First, as it was
shown in Section 4, the asymptotic form of the cost constraint is in
some sense weak, since it allows the average sampling cost to exceed
the upper bound for arbitrarily long time periods and still be
satisfied in the limit.  A more appropriate, albeit more complex,
model would be to require the cost constraint to be satisfied at all
time points. The construction of consistent and, more importantly,
efficient policies under this stricter version of the constraint is
work currently in progress.

Another extension is towards the direction of Markov process
control. Instead of assuming distinct independent populations with
i.i.d. observations, one might consider an average reward Markovian
Decision Process with unknown transition law and/or reward
distributions, and one or more nonasymptotic side constraints on the
average cost.  In this case the problem is to construct consistent
and, more importantly, efficient control policies, extending the
results of \cite{ad-mdp} in the constrained case.  

\centerline{\bf Acknowledgement}
This research was supported by the Greek Secreteriat of Research and
Technology under a Greece/Turkey bilateral research collaboration
program. The authors thank Nickos Papadatos and George Afendras for
useful discussions on the problem of consistent estimation in a random
sequence of random variables.

\bibliographystyle{abbrvnat}
%\bibliography{mdp-incompl-info}

\end{document}